\documentclass{article}

\usepackage[preprint]{neurips_2026}

\usepackage[utf8]{inputenc} 
\usepackage[T1]{fontenc}    
\usepackage{hyperref}       
\usepackage{url}            
\usepackage{booktabs}       
\usepackage{amsfonts}       
\usepackage{nicefrac}       
\usepackage{microtype}      
\usepackage{xcolor}         
\usepackage{graphicx}       
\usepackage{amsmath}
\usepackage{float}
\usepackage{booktabs}
\usepackage{amsthm}

\usepackage{bm}  

\usepackage{listings}
\usepackage{xcolor}

\usepackage{caption}
\definecolor{codekw}{RGB}{0,0,180}      
\definecolor{codecom}{RGB}{0,128,0}     
\definecolor{codestr}{RGB}{163,21,21}   
\definecolor{codenum}{RGB}{128,128,128} 
\definecolor{codedef}{RGB}{111,0,138}   

\lstdefinestyle{mintedclean}{
  language=Python,
  basicstyle=\ttfamily\small,
  commentstyle=\color{codecom},
  keywordstyle=\color{codekw}\bfseries,
  stringstyle=\color{codestr},
  numberstyle=\tiny\color{codenum},
  identifierstyle=\color{black},
  emph={self,Tensor,nn,einsum,linear},
  emphstyle=\color{codedef},
  showstringspaces=false,
  breaklines=true,
  tabsize=4,
  keepspaces=true,
  columns=fullflexible,
  numbers=left,
  numbersep=6pt,
  upquote=true,
  frame=none,
  backgroundcolor=\color{white},
}

\newtheorem{theorem}{Theorem}
\theoremstyle{definition}
\newtheorem{definition}{Definition}

\DeclareMathOperator*{\argmax}{arg\,max}
\DeclareMathOperator{\Tr}{Tr}

\usepackage{xcolor}
\definecolor{orange}{HTML}{EC9940}
\definecolor{red}{HTML}{C35250}
\definecolor{green}{HTML}{98C24C}
\definecolor{purple}{HTML}{A573D3}
\definecolor{blue}{HTML}{78C3FB}

\usepackage{bm}

\newcommand{\codelink}{[see supplementary material]}
\renewcommand{\codelink}{\url{https://github.com/tdooms/bae}~}

\newcommand\wei[1]{\bm{\textcolor{red}{#1}}}
\newcommand\act[1]{\bm{\textcolor{blue}{#1}}}

\title{Bilinear autoencoders find interpretable manifolds}

\author{%
    Thomas Dooms\thanks{Equal contribution.}\\
    University of Antwerp\\
    \texttt{doomsthomas@gmail.com}
    \And
    Ward Gauderis\footnotemark[1]\\
    Vrije Universiteit Brussel\\
    \texttt{ward.gauderis@vub.be}
    \AND
    Geraint Wiggins\\
    Vrije Universiteit Brussel
    \And
    Jose Oramas\\
    University of Antwerp, sqIRL
}

\begin{document}

\maketitle

\begin{abstract}
Sparse autoencoders have become a standard tool for uncovering interpretable latent representations in neural networks.
Yet salient concepts often span manifolds that current linear methods cannot capture without post hoc analysis.
This paper uses quadratic latents to close this gap: we implement these with bilinear autoencoders, which decompose activations into low-rank quadratic forms, compose linearly in weight space, and admit input-independent geometric analysis.
This qualitative difference in what concepts quadratic latents can detect challenges the standard linear representation hypothesis.
Our experiments and visualisations show that multi-dimensional geometries are highly prevalent and that composite latents capture them well, systematically improving reconstruction error in language models.
Furthermore, we show that autoencoders with varying geometric priors recover the same input subspace despite their dictionary entries being distinct.
Practically, these models serve as an unsupervised tool for manifold discovery, which we demonstrate through an interactive online visualizer for Qwen 3.5.
This is a step toward nonlinear but mathematically tractable latent representations whose composition is expressive and interpretable by design.
\end{abstract}

\begin{figure}[h]
  \centering
  \includegraphics[width=0.9\textwidth]{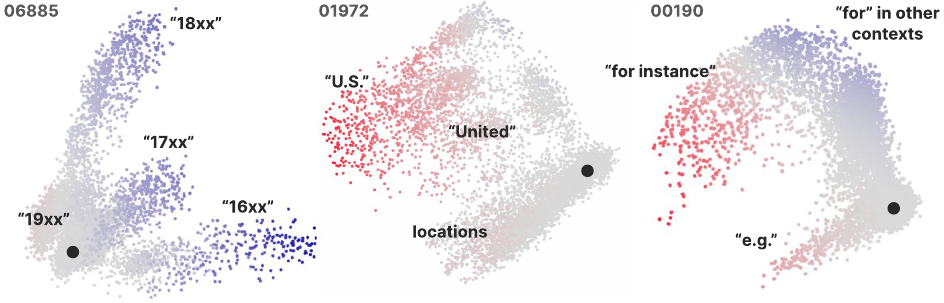}
  \caption{Three hand-picked latent features from our interactive viewer using Qwen 3.5: \url{https://bae-9xf.pages.dev/}. Our proposed bilinear autoencoders capture low-dimensional manifolds. The above visuals are created by linearly projecting activations of input samples into the (3D) eigenspace of three different dictionary elements. The left latent fires selectively on years whose geometry represents them as superposed linear subspaces. The middle latent activates on U.S.-related tokens, and the inactive (grey) samples correspond to other locations. The geometry is scattered across all 3 dimensions, containing visible semantic clusters. The right latent activates positively on `for instance' and `for example' and negatively on `for' in other contexts. Refer to \autoref{sub:geometry} for further explanation and to \autoref{app:examples} for random examples.}
  \label{fig:manifolds}
\end{figure}

\section{Introduction}
\label{sec:intro}
Neural networks are highly nonlinear systems shaped by emergent phenomena rather than explicit mechanisms. Logical human understanding, by contrast, is grounded in explicit structure and predictable composition. Mechanistic interpretability attempts to bridge this gap through reverse-engineering: identifying interpretable bases \citep{olah2022mechanistic,cunningham2023sparseautoencodershighlyinterpretable} and the circuits that act upon them. This lets us examine latent features in isolation, despite their co-dependence \citep{ameisen2025circuit}. Yet the core problem remains: these latents often lack a compositional interpretation, since their isolated behaviour does not determine their role in the whole \citep{gauderis_mechanistic_2026, coecke2021compositionalityit}. Extracted latents have implicit domains of applicability, regions in input or activation space within which they are meaningful. Latent splitting and absorption \citep{chanin2025absorptionstudyingfeaturesplitting} illustrate this: a latent's meaning is not fixed but depends on which other latents are active alongside it, so isolated study leaves this context-dependence invisible \citep{bhalla2026sparseautoencoderscaptureconcept}. The isolated study of latents is therefore of limited value, since their interaction may skew interpretation even at small scales \citep{méloux2025everythingeverywhereoncemechanistic}. Compositional understanding requires interactions that are explicit and computable \citep{tullCompositionalInterpretabilityXAI2024, gauderis_mechanistic_2026}.

Polynomials are a feature class with explicit compositional properties. They are nonlinear
in input, but the parameters that define them combine linearly: a sum of quadratic latents
is itself quadratic, parameterised by the sum of the original parameter vectors
\citep{doomsCompositionalityUnlocksDeep24, pearce2025bilinearmlpsenableweightbased}.
Linear composition therefore happens in weight space, not on activations. We add,
decompose, or compare latents by manipulating their coefficients directly, with no forward
pass and no data. Their geometry is read off these coefficients and spectral
properties \citep{lewisCompositionalHyponymyPositive2019}. Such a latent can carve a surface, isolate a cluster, or trace an ellipsoid
through a language model's activation space (\autoref{fig:manifolds}). Recent work
documents these structures through input-dependent and nonlinear projections
\citep{engels2025languagemodelfeaturesonedimensionally, modell2025originsrepresentationmanifoldslarge},
which recover the empirical geometry but sever its tie to the parameters that produced it.

This paper develops bilinear autoencoders as a lens on representation geometry. They decompose activations into a dictionary of quadratic latents whose receptive fields admit closed-form geometric analysis. We motivate the architecture with a framework of structural priors that map directly to manifold shapes, and validate them against modern language models. We make three contributions\footnote{The code is publicly available at \codelink.}
\begin{itemize}
\item \textbf{Architecture.} We adapt bilinear autoencoders to dictionary learning, with efficient training via a kernel trick, and motivate the architecture as a natural and tractable extension of existing linear sparse autoencoders (\autoref{sec:architecture}).
\item \textbf{Framework.} We propose three nested representation hypotheses (atomic, composite, quadratic), formalising how structural priors on the dictionary translate into receptive-field geometries: slabs, ellipsoids and hyperboloids, and arbitrary quadrics (\autoref{sec:structure}).
\item \textbf{Empirics.} Multi-dimensional geometries are prevalent in language models and composite latents capture them with lower reconstruction error. Bilinear autoencoders recover stable input subspaces across priors despite learning disjoint dictionaries (\autoref{sec:empirics}).
\end{itemize}

We write scalars (0-order tensors) as $s$ (lower-case), vectors (1-order tensors) as $\mathbf{v}$ (bold lower-case), matrices (2-order tensors) as $M$ (upper-case) and higher-order tensors as $\mathbf{T}$ (bold upper-case). Indexed tensors are denoted as $\mathbf{T}_{ijk}$ and slicing along tensor dimensions as $\mathbf{T}_{m:n}$. We use colour to distinguish \wei{weight-based} from \act{activation-based} quantities; a glossary is provided in \autoref{app:notation}.

\section{Bilinear autoencoders}
\label{sec:architecture}

Autoencoders reconstruct their inputs through a bottleneck, forcing the model to capture structure in the data \citep{hinton2006reducingdimensionality, hindupur2025projectingassumptionsdualitysparse}. They have become a key tool in interpretability for extracting meaningful latents from neural representations \citep{olshausen1996emergence, cunningham2023sparseautoencodershighlyinterpretable, bricken2023monosemanticity}. Their interpretative goal is dual: reconstruct the input, and trace the extracted latents back to the original model. These goals pull against each other, trading expressivity for analysability.

The truncated singular value decomposition is itself a dimensionality-reducing autoencoder, but its compression is too restrictive: neural representations typically span all dimensions, so rank-based methods miss much of their structure \citep{elhage2022toymodelssuperposition}. Sparse autoencoders instead focus on sparsity in a high-dimensional latent space, adding a sparsity-inducing nonlinearity to the encoder while keeping the decoder linear \citep{lewicki2000overcomplete, aharon2006ksvd, mairal2009onlinedictionary}. They recover linearly separable latents that trace back to the original representations. 
Because each approach assumes a fixed geometry, the resulting latents mirror the imposed prior instead of the underlying structure of the data.
\citep{engels2025languagemodelfeaturesonedimensionally, buchananLearningDeepRepresentations, hindupur2025projectingassumptionsdualitysparse}.

\subsection{Geometric priors for dictionary learning} \label{sub:priors}
Current autoencoder architectures rest on the \textit{linear representation hypothesis}: the claim that neural representations encode concepts mostly along linear directions \citep{park2024linearrepresentationhypothesisgeometry, cunningham2023sparseautoencodershighlyinterpretable, gao2024scalingevaluatingsparseautoencoders, rajamanoharan2024jumpingaheadimprovingreconstruction}. Yet, recent research shows that salient concepts often span more intricate manifolds that existing methods cannot capture \citep{modell2025originsrepresentationmanifoldslarge, engels2025languagemodelfeaturesonedimensionally, bhalla2026sparseautoencoderscaptureconcept}. Increasing capacity is a natural response, but unbounded expressivity collapses to trivial solutions and defeats interpretation \citep{gao2024scalingevaluatingsparseautoencoders, sutter2025nonlinearrepresentationdilemmacausal}.

Using autoencoders for interpretability is inherently about hypotheses \citep{hindupur2025projectingassumptionsdualitysparse} and finding the lens under which representations become understandable. Yet we do not know the neural representation geometry in advance; autoencoder architecture and its training should guide discovery toward simple latents without prescribing their form. We need primitives that are simple individually but compose into latents of varying complexity \citep{jenatton2011hierarchical, bhalla2026sparseautoencoderscaptureconcept}. This requires measuring the complexity of the learned feature class itself, not just of the activations it produces. Minimum description length \citep{gauderis_mechanistic_2026, rissanen1978mdl, grunwald2007minimum, ayonrinde2024interpretabilitycompressionreconsideringsae} is the natural candidate, but it requires a model class in which architectural description length independent of data is well-defined. Polynomials fit this role: their closed-form composition produces increasingly intricate, yet well-understood manifolds in their input space.

\subsection{From gated units to quadratic latents} \label{sub:architecture}

The simplest nonlinear instance of this idea is to model latents as quadratics. We implement
this using bilinear autoencoders, which adapt
sparse autoencoders to gated linear units \citep{shazeer2020gluvariantsimprovetransformer,
dauphin2017languagemodelinggatedconvolutional}. These units apply two matrices
$\wei{L}, \wei{R} \in \mathbb{R}^{h \times d}$ to their input $\act{\mathbf{x}} \in \mathbb{R}^d$
and project their element-wise multiplication using $\wei{C} \in \mathbb{R}^{k \times h}$.
\[
\mathrm{GLU}(\act{\mathbf{x}}) \;=\; \wei{C}\big(\sigma(\wei{L}\act{\mathbf{x}}) \odot \wei{R}\act{\mathbf{x}}\big)
\]
Removing the gating activation $\sigma$ has a negligible impact on accuracy
\citep{pearce2025bilinearmlpsenableweightbased, shazeer2020gluvariantsimprovetransformer}.
What remains computes second-order polynomials of $\act{\mathbf{x}}$, but remains linear in
the `lifted' \textit{product space} of pairwise interactions
$\act{X} = \act{\mathbf{x}}\act{\mathbf{x}}^\top$. The $i$-th latent activation is
\[
\act{z}_i \;:=\; \act{\mathbf{x}}^\top \wei{L}^\top \operatorname{diag}(\wei{\mathbf{c}}_i)\, \wei{R}\, \act{\mathbf{x}} \;=\; \langle \wei{W}_i,\, \act{X} \rangle_F
\]
where $\wei{\mathbf{c}}_i$ is the $i$-th row of $\wei{C}$ and $\wei{W}_i$ is the
bilinear form of the $i$-th latent,
\[
\wei{W}_i \;=\; \sum_{j \in \mathrm{supp}(\wei{\mathbf{c}}_i)} \wei{C}_{ij}\, \wei{\mathbf{l}}_j\wei{\mathbf{r}}_j^\top.
\]
Stacking all $\wei{W}_i$ into a third-order tensor gives the full encoder
$\wei{\mathcal{W}} \in \mathbb{R}^{k \times (d \times d)}$.
The autoencoder's training objective is to approximate $\act{X}$ through a symmetric
bottleneck:
\[
\act{\hat{X}} \;=\; \wei{\mathcal{W}}^\top \wei{\mathcal{W}}\, \act{X}.
\]
Lifting the input to $\act{X} = \act{\mathbf{x}}\act{\mathbf{x}}^\top$ turns the
autoencoding problem into a linear factorisation of quadratic latents. This representation
makes the geometric structure explicit: quadric activation regions, such as ellipsoids,
paraboloids and hyperboloids, are level sets of linear functionals of $\act{X}$
(\autoref{sec:structure}; \autoref{fig:geometry}). This lifted space has $\Theta(d^2)$
dimensions, making direct materialisation of $\act{X}$ intractable at scale; we instead
evaluate the loss via a kernel trick (\autoref{app:efficiency}). Since the model cannot
degenerate to identity-like maps---as ordinary SAEs do without sparsity penalties---its
bottleneck lies in which combinations to preserve. In other words, the learned
$\wei{\mathcal{W}}$ specifies exactly which second-order structure is kept and discarded.

\begin{figure}[H]
  \centering
  \includegraphics[width=0.7\textwidth]{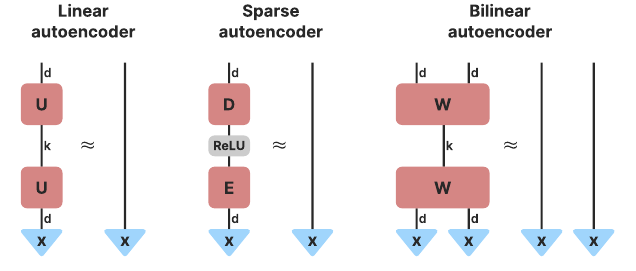}
  \caption{Most autoencoders reconstruct their inputs nonlinearly. Instead, bilinear
  autoencoders linearly reconstruct $\act{X} = \act{\mathbf{x}}\act{\mathbf{x}}^\top \cong \act{\mathbf{x}}\otimes\act{\mathbf{x}}$ in the $d \times d$ product space, which is quadratic in the input $\act{\mathbf{x}}$.}
  \label{fig:autoencoders}
\end{figure}

\subsection{Scale-invariant activation sparsity} \label{sub:hoyer}

The choice of $\wei{\mathcal{W}}$ governs which second-order patterns survive; activation sparsity governs how cleanly each fires. The two are complementary. The standard approach minimises sparsity-inducing norms ($L_{0 < p \le 1}$) on activations, but causes trivial shrinkage and can destabilise training by producing dead latents \citep{bricken2023monosemanticity}. Both pull against our goal: we want \emph{relative} prominence across the sample dimension $\act{\mathbf{z}}_i \in \mathbb{R}^n$ (effective batch size $n = \text{samples} \times \text{sequence length}$), not uniformly small activations \citep{hoyer}.
\begin{equation}
\mathrm{Hoyer}_{\text{density}}(\act{\mathbf{z}}_i - \wei{b}_i) \;=\; \frac{\|\act{\mathbf{z}}_i - \wei{b}_i\|_1/\|\act{\mathbf{z}}_i - \wei{b}_i\|_2 - 1}{\sqrt{n}-1}
\label{eq:hoyer}
\end{equation}
This measure runs from 0 for one-hot vectors to 1 for perfectly uniform ones, and is invariant to scale by construction and, through $\wei{b}_i$, to translation: each latent learns the offset that makes its activation distribution as sparse as possible. The penalty is minimised when activations concentrate around any level set $\act{\mathbf{z}}_i = \wei{b}_i$, rewarding latents whose level sets align with structure in the data. It pressures latents to fire selectively on inputs without collapsing to zero \citep{bussmann2024batchtopksparseautoencoders}.


\section{Structural constraints of bilinear autoencoders}
\label{sec:structure}

Any sparse autoencoder enforces two kinds of structural constraints: the decoder operationalises a hypothesis about how neural representations are constructed from concepts \citep{felRabbitHullTaskRelevant2025}, while the encoder's receptive fields encode assumptions about how those concepts appear in the activation space \citep{hindupur2025projectingassumptionsdualitysparse}. In bilinear autoencoders, encoder and decoder are mirrors of each other; aligned both sides by construction. This section unpacks what each side implies, working through three nested hypotheses of increasing structure: the quadratic hypothesis (\autoref{sub:quadratic}), the atomic refinement (\autoref{sub:atoms}), and the composite refinement (\autoref{sub:composites}).

\subsection{The quadratic representation hypothesis}
\label{sub:quadratic}

Sparse autoencoders operationalise the \emph{linear representation hypothesis}
\citep{elhage2022toymodelssuperposition,felRabbitHullTaskRelevant2025}: representations
decompose into a sparse sum of directions.
\begin{definition}[Linear Representation Hypothesis]
A representation $\act{\mathbf{x}} \in \mathbb{R}^d$ satisfies the \emph{linear
representation hypothesis} if there exists a dictionary of directions
$\wei{\mathbf{w}}_i \in \mathbb{R}^d$ and a sparse activation vector
$\act{\mathbf{z}} \in \mathbb{R}^k$ such that
$\act{\mathbf{x}} = \sum_{i \in \mathrm{supp}(\act{\mathbf{z}})} \act{z}_i\, \wei{\mathbf{w}}_i$.
\end{definition}
Bilinear decoders instead reconstruct $\act{X} = \act{\mathbf{x}}\act{\mathbf{x}}^\top$,
which lifts the problem to linear dictionary learning on the \emph{Veronese variety}
$\mathcal{V} = \{\act{\mathbf{x}}\act{\mathbf{x}}^\top : \act{\mathbf{x}} \in \mathbb{R}^d\}
\subset \mathbb{R}^{d\times d}_{\mathrm{sym}}$. This is the simplest non-linear relaxation
of the linear hypothesis, and we formalise it accordingly.
\begin{definition}[Quadratic Representation Hypothesis]
A representation $\act{\mathbf{x}} \in \mathbb{R}^d$ satisfies the \emph{quadratic
representation hypothesis} if there exists a dictionary of symmetric matrices
$\wei{W}_i \in \mathbb{R}^{d\times d}_{\mathrm{sym}}$ and a sparse activation vector
$\act{\mathbf{z}} \in \mathbb{R}^k$ such that
$\act{\mathbf{x}}\act{\mathbf{x}}^\top = \sum_{i \in \mathrm{supp}(\act{\mathbf{z}})} \act{z}_i\, \wei{W}_i$.
\end{definition}
We adopt these hypotheses not as empirical claims but as the operationalising assumptions
required for any form of dictionary learning
\citep{nguyenKernelDictionaryLearning2012,felRabbitHullTaskRelevant2025}; whether neural
representations actually satisfy the quadratic hypothesis is an empirical question which we
address in \autoref{sec:empirics}. Under it, the dictionary atoms $\wei{W}_i$ are
symmetric matrices rather than directions: each encodes a specific pattern of features
that co-vary together, with diagonal entries $\act{x}_i^2$ capturing the energy of
individual dimensions and off-diagonal entries $\act{x}_i \act{x}_j$ encoding pairwise
co-activations. Reconstructing $\act{X}$ sparsely is therefore equivalent to decomposing
the sample's covariance structure into a small set of fundamental interaction patterns,
equivalent to reconstruction ICA on the Veronese embedding \citep{leICAReconstructionCost}.
Where an SAE asks \emph{which directions activate}, a bilinear autoencoder asks
\emph{which co-occurrence patterns are active}.

Two structural consequences follow directly from targeting $\act{X}$ rather than
$\act{\mathbf{x}}$. First, \emph{second-order sufficiency}: a feature that survives
training must explain not merely which dimensions activate but how they activate together,
a strictly harder constraint that makes signed and mean-field structure inaccessible to
the decoder. Second, \emph{antipodal symmetry}: since
$(-\act{\mathbf{x}})(-\act{\mathbf{x}})^\top = \act{\mathbf{x}}\act{\mathbf{x}}^\top$,
the model inherently ignores global phase, making it the natural choice for features
stored in antipodal superposition \citep{elhage2022toymodelssuperposition} where a signed
direction $\pm\wei{\mathbf{w}}$ encodes a single concept and a standard SAE must spend
two features on what is geometrically one direction.

\subsection{Rank-1 atoms and energy separability}
\label{sub:atoms}

Just as SAEs build their dictionary from vectors, the simplest linear objects, bilinear
autoencoders are most naturally studied when their atoms are rank-1 matrices, the simplest
quadratic objects.



\begin{definition}[Atomic Representation Hypothesis]
A representation satisfies the \emph{atomic representation hypothesis} if the dictionary
atoms in the quadratic representation hypothesis are rank-1:
$\wei{W}_i = \wei{\mathbf{w}}_i\wei{\mathbf{w}}_i^\top$.
\end{definition}
Reconstructing $\act{\mathbf{x}}\act{\mathbf{x}}^\top = \sum_{i \in \mathrm{supp}(\act{\mathbf{z}})} \act{z}_i\, \wei{\mathbf{w}}_i\wei{\mathbf{w}}_i^\top$ assumes that the covariance structure decomposes into a sparse sum of rank-1 prototypes; a few directions alone jointly account for the observed interactions.

\begin{figure}[t]
  \centering
  \includegraphics[width=0.71\textwidth]{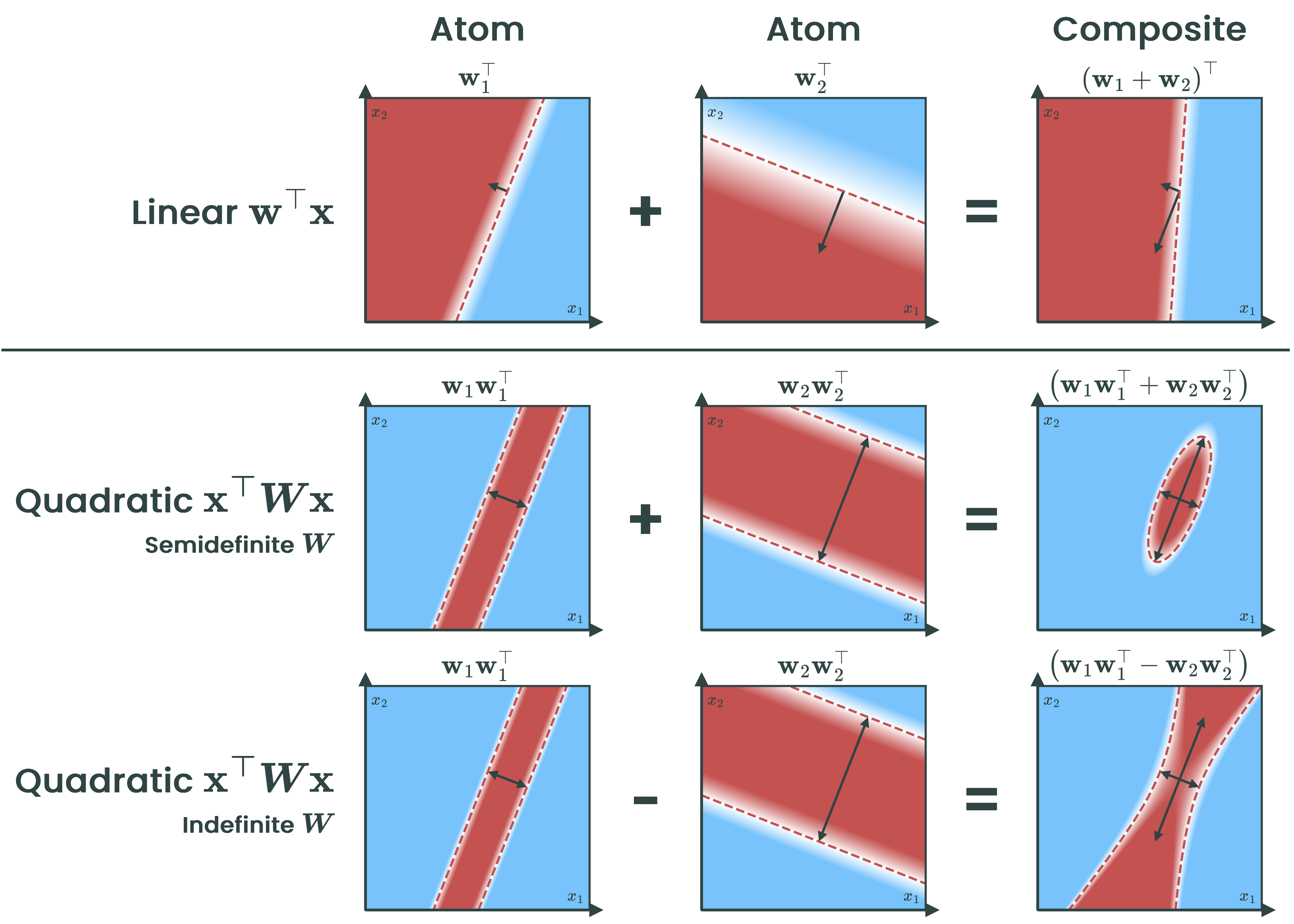}
    \caption{%
    \textbf{Linear atoms gain no expressivity under composition; quadratic atoms compose into quadric geometry.}
    (\textit{Top}) Linear atoms are signed half-space detectors: their sum is another half-space, merely averaging the directions.
    (\textit{Middle and bottom}) Quadratic atoms are symmetric slabs measuring energy along a direction, ignoring phase. Composing two rank-1 forms yields a rank-2 symmetric matrix, unlocking quadric geometries. The sum is a semidefinite \emph{ellipsoid}, a conjunction detector for $\wei{\mathbf{w}}_1$ \emph{\color{red} and} $\wei{\mathbf{w}}_2$; the difference is an indefinite \emph{hyperboloid}, a detector for $\wei{\mathbf{w}}_1$ \emph{\color{red} and not} $\wei{\mathbf{w}}_2$.
    Linear composites span a flat set of directions; quadratic composites generate an algebra of symmetric matrices whose geometry ranges over paraboloids, ellipsoids, and hyperboloids.%
  }
  \vspace*{-1em}
  \label{fig:geometry}
\end{figure}

\paragraph{Receptive fields.}
The encoder traces this hypothesis back to the input space, and the difference from a
standard SAE determines what concept geometry each can detect. An SAE computes a signed
projection and thresholds it, while an atomic bilinear encoder computes a squared
projection:
\begin{equation}
  \act{z}_i^{\mathrm{lin}} = \mathrm{ReLU}(\wei{\mathbf{w}}_i^\top \act{\mathbf{x}}),
  \qquad
  \act{z}_i^{\mathrm{atom}} = (\wei{\mathbf{w}}_i^\top \act{\mathbf{x}})^2
  = \langle \wei{\mathbf{w}}_i\wei{\mathbf{w}}_i^\top,\, \act{X} \rangle_F.
\end{equation}

The second equality shows that each bilinear measurement is a linear functional on
$\act{X}$, the inner product of the Veronese image with the rank-1 atom
$\wei{\mathbf{w}}_i\wei{\mathbf{w}}_i^\top$, or equivalently a Born-rule measurement:
the expectation value of the projector $\wei{\mathbf{w}}_i\wei{\mathbf{w}}_i^\top$ on
the pure state $\act{X}$. The encoder is a projection encoder
\citep{hindupur2025projectingassumptionsdualitysparse} from the Veronese variety onto
the space of diagonal matrices.

The two receptive fields, for some threshold $\tau \ge 0$, are different:
\begin{equation}
  \mathcal{F}_i^{\mathrm{lin}} = \{\act{\mathbf{x}} : \wei{\mathbf{w}}_i^\top\act{\mathbf{x}} > \tau\},
  \qquad
  \mathcal{F}_i^{\mathrm{atom}} = \{\act{\mathbf{x}} : (\wei{\mathbf{w}}_i^\top\act{\mathbf{x}})^2 > \tau\}.
\end{equation}
The SAE's is a directed half-space \citep{hindupur2025projectingassumptionsdualitysparse}; the bilinear's is a \emph{symmetric slab}, two
parallel hyperplanes around $\wei{\mathbf{w}}_i^\perp$ and symmetric about the origin.
Where a linear atom asks \emph{which side of a hyperplane $\act{\mathbf{x}}$ lies on},
a quadratic atom asks \emph{how close to a hyperplane $\act{\mathbf{x}}$ is}, regardless
of sign. The respective data assumptions are \emph{linear separability} and \emph{energy
separability}: a concept detectable from the sign and magnitude of a projection, versus
from its magnitude alone. See \autoref{fig:geometry} for a visual comparison\footnote{Although $\wei{\mathbf{l}}_i, \wei{\mathbf{r}}_i$
need not be equal in practice (\autoref{sec:architecture}), this is without loss of generality: since
$\act{\mathbf{x}}^\top A \act{\mathbf{x}} = \act{\mathbf{x}}^\top \tfrac{A + A^\top}{2}
\act{\mathbf{x}}$ for any $A$, only the symmetric part
$\wei{W}_i = \tfrac{1}{2}(\wei{\mathbf{l}}_i\wei{\mathbf{r}}_i^\top +
\wei{\mathbf{r}}_i\wei{\mathbf{l}}_i^\top)$ contributes to $\act{z}_i$. This has rank at most 2 and its eigendecomposition reduces
to a signed sum of two rank-1 atoms, i.e.\ the composite case of
\autoref{sub:composites}.}.

\paragraph{Intrinsic sparsity.}
The slab geometry has a further consequence: quadratic atoms are far more selective than
linear ones in high dimensions. The proof is in \autoref{app:proof}.

\begin{theorem}[Exponential receptive field gap]
\label{thm:receptive-field}
Let $\wei{\mathbf{w}} \in S^{d-1}$ and $\act{\mathbf{x}} \sim \mathrm{Unif}(S^{d-1})$.
For any fixed threshold $\tau \in (0,1)$,
\[
  \frac{\mathbb{P}\bigl((\wei{\mathbf{w}}^\top\act{\mathbf{x}})^2 > \tau\bigr)}
       {\mathbb{P}\bigl(\wei{\mathbf{w}}^\top\act{\mathbf{x}} > \tau\bigr)}
  \;=\; \Theta\!\left(e^{-d\tau(1-\tau)/2}\right).
\]
\end{theorem}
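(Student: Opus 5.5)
The plan is to reduce both probabilities to one-dimensional tail integrals of the marginal $t = \wei{\mathbf{w}}^\top\act{\mathbf{x}}$, estimate those tails sharply, and compare the resulting exponent with $d\tau(1-\tau)/2$. I expect this comparison to settle the statement only in a Gaussian-scaled regime, not for every fixed $\tau\in(0,1)$.

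\textbf{Step 1: reduction to one dimension.} By rotation invariance of $\mathrm{Unif}(S^{d-1})$ we may take $\wei{\mathbf{w}}=\mathbf{e}_1$. The marginal $t=\act{x}_1$ then has density $f_d(t)=c_d(1-t^2)^{(d-3)/2}$ on $[-1,1]$, with $c_d=\Theta(\sqrt d)$. Antipodal symmetry $t\mapsto -t$ gives
\[
  \mathbb{P}\bigl(t^2>\tau\bigr)=2\,\mathbb{P}\bigl(t>\sqrt\tau\bigr).
\]
So the ratio in the statement equals $2\,T_d(\sqrt\tau)/T_d(\tau)$, where $T_d(a)=\int_a^1 f_d(t)\,dt$.

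\textbf{Step 2: sharp tail estimate.} I would use a Laplace-type bound at the endpoint. Set $u=t-a$ and use $1-t^2\le (1-a^2)e^{-2au/(1-a^2)}$, together with the matching lower bound on $[a,a+1/d]$. This gives, for fixed $a\in(0,1)$,
\[
  T_d(a)=\Theta\!\left(\frac{(1-a^2)^{(d-1)/2}}{a\sqrt d}\right).
\]
The constants depend only on $a$, and all polynomial prefactors cancel in the ratio up to a $\tau$-dependent constant (essentially $2\sqrt\tau$).

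\textbf{Step 3: exponent comparison, the main obstacle.} Dividing the two estimates from Step 2 gives
\[
  \frac{(1-\tau)^{(d-1)/2}}{(1-\tau^2)^{(d-1)/2}}=(1+\tau)^{-(d-1)/2}=e^{-\frac{d}{2}\log(1+\tau)+O(1)}.
\]
The exponent the statement asks for instead arises from the Gaussian approximation $\sqrt d\,t\approx\mathcal N(0,1)$. That approximation gives $\mathbb{P}(t>\tau)\approx e^{-d\tau^2/2}$ and $\mathbb{P}(t^2>\tau)\approx e^{-d\tau/2}$, hence the ratio $e^{-d\tau(1-\tau)/2}$.

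The two exponents differ: $\log(1+\tau)-\tau(1-\tau)=\tau^2/2+O(\tau^3)$. To close this step I would first restrict to the regime where the Gaussian approximation is exact to constant order, namely $\tau=\tau_d$ with $d\tau_d^2=O(1)$. In that regime $d\log(1+\tau)$ and $d\tau(1-\tau)$ differ by $O(1)$, the Gaussian tails are valid via the bound $(1-t^2)^{d/2}=e^{-dt^2/2+O(dt^4)}$, and Step 2 yields exactly $\Theta(e^{-d\tau(1-\tau)/2})$.

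For genuinely fixed $\tau$, Step 3 shows the true rate is $\Theta\bigl((1+\tau)^{-d/2}\bigr)$. This equals $e^{-d\tau(1-\tau)/2}$ only up to a factor $e^{\Theta(d\tau^2)}$, which is unbounded in $d$. So the statement as worded cannot be proved in that regime.

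Under either reading the qualitative conclusion the paper needs survives: the ratio is exponentially small in $d$, with rate $\tfrac12\log(1+\tau)\ge \tfrac12\tau(1-\tau)$. This gives the stated expression as an upper bound, $O(e^{-d\tau(1-\tau)/2})$, for all fixed $\tau$. The matching lower bound holds only in the $d\tau^2=O(1)$ regime.
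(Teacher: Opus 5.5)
Your verdict is correct: the theorem as stated is false for fixed $\tau$, and the flaw is in the paper's proof, not yours. The paper uses exactly the Gaussian heuristic you set aside in Step 3. It invokes the Poincar\'e--Maxwell--Borel lemma ($\sqrt d\,\mathbf w^\top\mathbf x\to\mathcal N(0,1)$ in distribution) and concentration of $\|\mathbf g\|$ around $\sqrt d$. From these it asserts $\mathbb P(\mathbf w^\top\mathbf x>\tau)\sim\mathbb P(Z>\tau\sqrt d)$ and $\mathbb P\bigl((\mathbf w^\top\mathbf x)^2>\tau\bigr)\sim 2\,\mathbb P(Z>\sqrt{\tau d})$ for fixed $\tau$. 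It then applies Mills' ratio and absorbs $\sqrt\tau$ into the $\Theta$.

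That step is not justified, for two reasons:
\begin{itemize}
\item Convergence in distribution gives only an additive $o(1)$ error, uniform in the threshold. This says nothing when both sides are $e^{-\Theta(d)}$.
\item Write $\mathbf w^\top\mathbf x=g_1/\|\mathbf g\|$. The event $g_1>a\|\mathbf g\|$ is realised by $g_1\approx a\sqrt d$ \emph{together with} $\|\mathbf g_{2:d}\|$ dropping to about $\sqrt{d(1-a^2)}$. That is a deviation of order $\sqrt d$, which $O(1)$ concentration does not cover, and its cost is exactly what turns the exponent $-da^2/2$ into $\tfrac d2\log(1-a^2)$.
\end{itemize}

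Your exact-marginal computation (Steps 1--2, rigorous for fixed $a$) avoids this. It gives the ratio as $\Theta\bigl((1+\tau)^{-d/2}\bigr)$, asymptotically $2\sqrt\tau\,(1+\tau)^{-(d-1)/2}$. Since $\log(1+\tau)>\tau(1-\tau)$ on $(0,1)$, the ratio is $o\bigl(e^{-d\tau(1-\tau)/2}\bigr)$ for every fixed $\tau$. So only the $O$ direction holds. The qualitative gap is actually stronger than claimed: at $\tau=1/2$, $d=1024$ the ratio is about $10^{-90}$, not $10^{-56}$.

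One correction to your closing remark about the regime $\tau=\tau_d$ with $d\tau_d^2=O(1)$: the exact $\Theta$ fails there too. The polynomial prefactor (the $\sqrt\tau$ the paper absorbs) is a constant only when $\tau$ is fixed. You also cannot reuse Step 2 there, since its constants were established only for fixed $a$. Doing the Gaussian computation directly, suppose $d\tau_d\to\infty$ and $d\tau_d^2=O(1)$. Then the denominator is $\Theta(1)$ and the numerator is $\Theta\bigl((d\tau_d)^{-1/2}e^{-d\tau_d/2}\bigr)$. So the ratio is $\Theta\bigl((d\tau_d)^{-1/2}e^{-d\tau_d(1-\tau_d)/2}\bigr)$, still off by an unbounded factor. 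The claimed form survives only when $d\tau_d=O(1)$, where both probabilities are $\Theta(1)$ and the statement is trivial. Drop that rescue and state the result as $\Theta\bigl((1+\tau)^{-d/2}\bigr)$, hence $O\bigl(e^{-d\tau(1-\tau)/2}\bigr)$. That is all the paper's qualitative conclusion needs.
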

Quadratic atoms achieve intrinsic activation sparsity: their receptive fields shrink
exponentially faster with dimension than those of linear atoms. At $\tau = 1/2$ and
$d = 1024$, the ratio is $\approx 10^{-56}$.

\subsection{Composite atoms and quadric separability}
\label{sub:composites}

Linear atoms compose poorly: summing two SAE atoms yields another half-space, merely
reoriented (\autoref{fig:geometry}). \citet{bhalla2026sparseautoencoderscaptureconcept}
identify this as the reason SAEs tile manifolds through many fragmented atoms rather than
capturing them with a compact group, and call for featurizers that treat geometric objects
as primitives from the outset. Quadratic atoms answer this directly: composing two rank-1
forms produces a rank-2 symmetric matrix whose receptive field is an ellipsoid or
hyperboloid rather than a slab (\autoref{fig:geometry}). We formalise the hypothesis that
concepts exploit this structure.

\begin{definition}[Composite Representation Hypothesis]
A representation satisfies the \emph{composite representation hypothesis} if the
dictionary atoms in the quadratic representation hypothesis are sparse mixtures of
rank-1 matrices over a shared base dictionary
$\{\wei{\mathbf{w}}_j\}_{j=1}^h \subset \mathbb{R}^d$:
$\wei{W}_i = \sum_{j \in \mathrm{supp}(\wei{\mathbf{c}}_i)} \wei{C}_{ij}\,
\wei{\mathbf{w}}_j\wei{\mathbf{w}}_j^\top$,
where $\wei{\mathbf{c}}_i$ is sparse and fixed per composite $\wei{W}_i$.
\end{definition}

Substituting into the quadratic hypothesis gives
$\act{\mathbf{x}}\act{\mathbf{x}}^\top = \sum_{i \in \mathrm{supp}(\act{\mathbf{z}})} \act{z}_i \sum_{j \in \mathrm{supp}(\wei{\mathbf{c}}_i)} \wei{C}_{ij}\,
\wei{\mathbf{w}}_j\wei{\mathbf{w}}_j^\top$: the representation is reconstructed through two levels of sparsity, a dynamic activation $\act{\mathbf{z}}$ over composites and a static grouping $\wei{\mathbf{c}}_i$ of base atoms. This is related to hierarchical dictionary methods \citep{luoAtomsTreesBuilding2026}, but differs in that the grouping arises from a single global sparsity penalty on $\wei{C}$ rather than an added loss term, letting the optimiser allocate capacity unevenly across composites.

\paragraph{Receptive fields.}
A composite encoder computes a weighted sum of squared projections, equivalently the
expectation value of a low-rank mixture of atomic projectors:
\begin{equation}
  \act{z}_i^{\mathrm{comp}} \;=\; \act{\mathbf{x}}^\top \wei{W}_i \act{\mathbf{x}}
  \;=\; \sum_j \wei{\lambda}_j^{(i)} \bigl(\wei{\mathbf{v}}_j^{(i)\top}\act{\mathbf{x}}\bigr)^2,
  \qquad
  \wei{W}_i \;=\; \sum_{j \in \mathrm{supp}(\wei{\mathbf{c}}_i)} \wei{C}_{ij}\,
  \wei{\mathbf{w}}_j\wei{\mathbf{w}}_j^\top,
\end{equation}
where $\wei{\lambda}_j^{(i)}, \wei{\mathbf{v}}_j^{(i)}$ are the eigenvalues and
eigenvectors of $\wei{W}_i$. When $\wei{W}_i$ is semidefinite, it aggregates slab
detectors into a single quadratic form whose receptive field is an \emph{ellipsoidal
region}: an ellipsoid with principal axes given by the eigenvectors of $\wei{W}_i$ and
radii set by the eigenvalues. When $\wei{W}_i$ is indefinite, it combines slab detectors
with opposite signs, and the receptive field becomes a \emph{hyperboloidal region}: a
hyperboloid whose positive sheet aligns with the positive eigenspace and the negative
sheet with the negative eigenspace. In both cases, a composite atom asks \emph{how close
to a quadric surface $\act{\mathbf{x}}$ is}, a strict generalisation of the atomic
question of how close to a hyperplane. The data assumption is \emph{low-rank quadratic
separability}: a concept is detectable from the projection of $\act{\mathbf{x}}$ onto a
small subspace, with the subspace structure encoded in the eigenvectors of $\wei{W}_i$
(\autoref{fig:geometry}).

A rank-$r$ composite can achieve compact subspace capture
\citep{bhalla2026sparseautoencoderscaptureconcept} of any manifold living in the
$r$-dimensional span of its eigenvectors, something that requires $r$ separate atoms in a
linear SAE: a circle needs a rank-2 composite, a sphere a rank-3 one. Whereas
\citet{bhalla2026sparseautoencoderscaptureconcept} recover such groupings post-hoc from
co-activation statistics, the sparse mixing layer $\wei{C}$ discovers them during
training, baking manifold subspaces directly into the dictionary.

\section{Empirical analysis of quadratic latents} \label{sec:empirics}

The three hypotheses progressively weaken the constraint on $\wei{C}$, increasing expressivity and decreasing interpretability. \emph{Atomic} fixes $\wei{C} = I$, making each latent a single rank-1 atom. \emph{Composite} imposes a global top-$K$ on $\wei{C}$ (allowing $0.1\%$ non-zero entries), letting the optimiser allocate atoms unevenly across composites. \emph{Quadratic} lifts the constraint entirely, freeing latents to reach arbitrary rank. Instantiating the full third-order tensor is impractical at scale, so it is approximated with a rank-$h$ polyadic decomposition \citep{polyadicdecomp}. See \autoref{app:setup} for the experimental setup.

The question is where this trade-off sits in practice. We measure reconstruction across priors, the prevalence of multi-dimensional geometry under each, and the stability of the recovered structure across runs, then qualitatively assess the extracted geometries through our interactive viewer.

\subsection{Composite latents improve reconstruction} \label{sub:reconstruction}

The simplest way to verify the utility of composites is to measure their reconstruction quality. Across even layers of Qwen 3.5 ($0.8 \text{B}$), the three regimes order cleanly: quadratic reconstructs best, composite next, atomic worst (\autoref{fig:layers}). Atomic latents are too rigid; composition recovers structure they miss, and the gap between composite and quadratic suggests that even low-rank composites fall short. This provides systematic evidence that semantic manifolds inhabit far more ambient directions than their intrinsic dimension requires \citep{olah2024manifolds}.

\begin{figure}[H]
  \centering
  \includegraphics[width=0.9\textwidth]{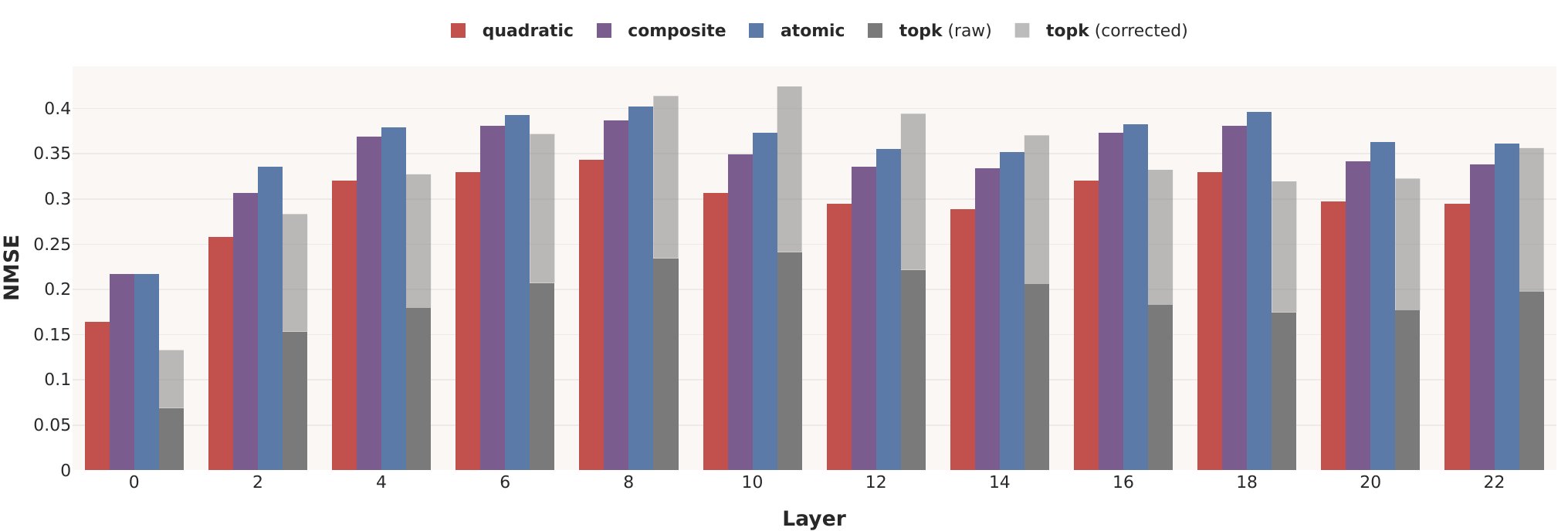}
  \caption{Reconstruction error across even layers. The three priors consistently follow the same error ordering, $\texttt{quadratic} < \texttt{composite} < \texttt{atomic}$, indicating that composite latents capture more structure than their single-rank counterparts. We use a TopK ($K=32$) baseline \citep{gao2024scalingevaluatingsparseautoencoders}, whose reconstruction error is corrected to account for the lifted space (see \autoref{app:correction}).}
  \label{fig:layers}
\end{figure}

\subsection{Multi-dimensional geometries are prevalent in language models} \label{sub:geometry}

How prevalent are multi-dimensional geometries and what do they look like \citep{modell2025originsrepresentationmanifoldslarge, engels2025languagemodelfeaturesonedimensionally, bhalla2026sparseautoencoderscaptureconcept}? For this, we consider the spectral properties of the latents from the autoencoder on layer 18 of Qwen 3.5. Results depend on the structural regime: most composite latents are well-captured by 5 dimensions while the quadratic latents are consistently high-rank, as seen in \autoref{fig:composites}. The former indicates that, while much structure is single-dimensional, there is a long tail of more complicated geometries that are commonly missed \citep{sharkeyOpenProblemsMechanistic2025}. The latter indicates that this structure does not emerge unless explicitly required.

\begin{figure}[H]
  \centering
  \includegraphics[width=0.9\textwidth]{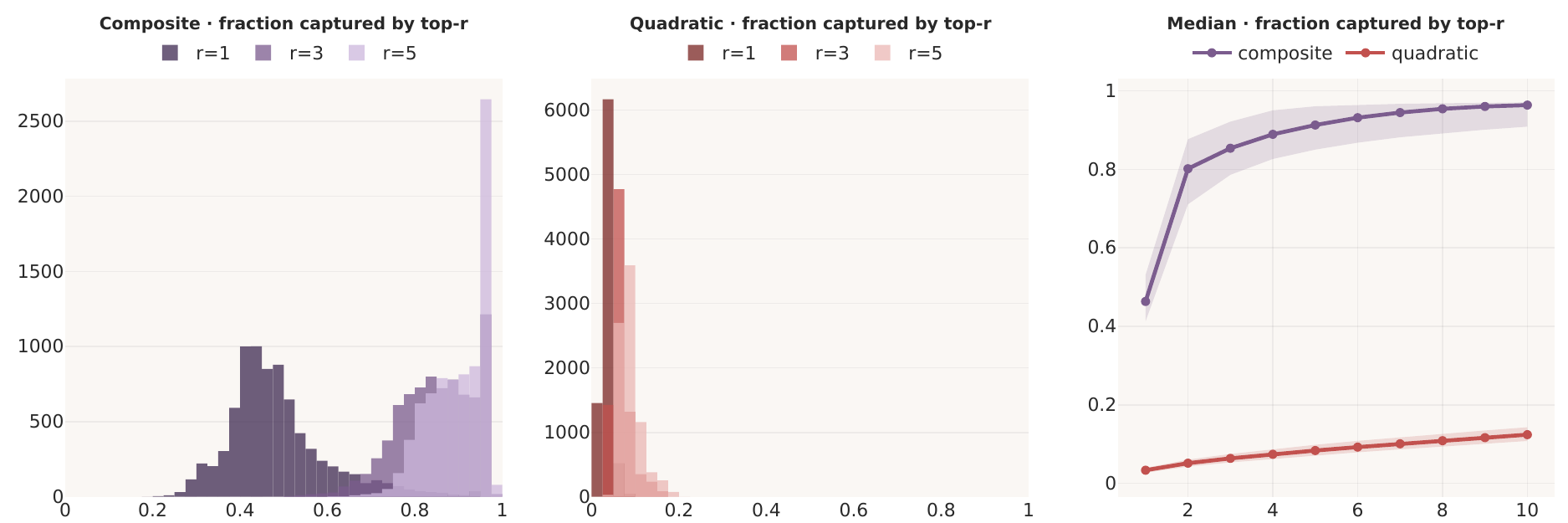}
  \caption{Comparison of the captured structure across ranks between quadratic and composite priors. The histograms show the captured variance of all latents for the top 1, 3 and 5 dimensions while the right plot shows their median as rank increases. With sparsity applied, most latents are very low rank with a tail of higher-dimensional structure. Without sparsity, the majority of latents are full-rank.}
  \label{fig:composites}
\end{figure}

Next, we turn toward a qualitative assessment of our \href{https://bae-9xf.pages.dev/}{interactive viewer} from which we discuss four common geometric patterns. The observed geometry is much richer than a linear prior would predict; especially the affine nature of certain concepts is interesting (e.g., two right manifolds in \autoref{fig:manifolds}).

\textbf{Cone:} a concept spans a line or cone, the simplest geometry. \href{https://bae-9xf.pages.dev/composite/06885}{6885} activates on specific years; \href{https://bae-9xf.pages.dev/composite/07552}{7552} activates on instances of the token `c'. The 3D visualisation often shows these latents in superposition, with non-orthogonal directions sharing a latent. \\
\textbf{Cluster:} A subspace populated by discrete groupings without linear structure. \href{https://bae-9xf.pages.dev/composite/01972}{1972} activates on references to the USA, with small clusters around each of the tokens in `U.S.'. \href{https://bae-9xf.pages.dev/composite/00152}{152} separates instances of ``said'' by context. Linear atoms could tile the point cloud but would lose the grouping. \\
\textbf{Interpolation:} A meaningful concept that smoothly interpolates between others. \href{https://bae-9xf.pages.dev/composite/02753}{2753} puts instances of `drug' between common addictions (coffee, smoking, alcohol). \href{https://bae-9xf.pages.dev/composite/00384}{384} activates on `de' in names (Catherine `de' Medici), spanning between the English `of' (Joan `of' Arc) and unrelated `de' tokens. \\
\textbf{Subset:} A latent that activates with opposite signs on distinct semantic regions of similar concepts. \href{https://bae-9xf.pages.dev/composite/00190}{190} distinguishes between occurrences of `for'. \href{https://bae-9xf.pages.dev/composite/03021}{3021} distinguishes `in' tokens by whether the next token is a time or location.

Together, our quantitative and qualitative results point toward an abundance of non-linear structure in the language models we studied. Dominant concepts are often approximately linear, but their surrounding geometry is not, and one-dimensional decompositions discard it. This explains why composite autoencoders achieve better reconstructions and positions them as a promising tool and lens for extracting and viewing manifolds.

\subsection{Extracted subspaces are stable but individual latents are not}
\label{sub:similarity}

How consistent are autoencoders across runs? Existing answers rely on partial evidence: a subset of
encoder directions align across runs; reconstructions correlate. Bilinear autoencoders
allow an exact answer from weights alone. Each autoencoder is summarised by the kernel matrix
$\wei{K} = \wei{\mathcal{W}}\wei{\mathcal{W}}^\top$, and two such matrices admit either
a global Frobenius comparison (\autoref{eq:frobenius}) or a per-latent comparison via
Hungarian matching (\autoref{eq:hungarian}), after \citet{paulo2025sparseautoencoderstraineddata}.

\begin{equation}
\mathrm{Sim}_{F}(\wei{K}, \wei{K}') \;=\; 1 - \frac{\|\wei{K} - \wei{K}'\|_F^2}{\|\wei{K}\|_F^2 + \|\wei{K}'\|_F^2} \;=\; \frac{2 \Tr(\wei{K}^\top \wei{K}')}{\|\wei{K}\|^2_F + \|\wei{K}'\|^2_F}
\label{eq:frobenius}
\end{equation}

\begin{equation}
\mathrm{Sim}_{H}(\wei{\mathcal{W}}, \wei{\mathcal{W}}') \;=\; \frac{\|\wei{\mathcal{W}}^\top P \wei{\mathcal{W}}'\|_F}{\|\wei{\mathcal{W}'}^\top \wei{\mathcal{W}}'\|_F}
\quad \text{where} \quad
P = \argmax_{P \in \mathcal{P}_k} \Tr(\wei{\mathcal{W}}^\top P \wei{\mathcal{W}}')
\label{eq:hungarian}
\end{equation}

Where $\mathcal{P}_k$ denotes permutation of $k$ latents. The two metrics tell different stories. Across bilinear autoencoders that differ only in their sparsity penalty, individual latents agree only 15--50\%, while the global
subspace they span agrees above 90\% for similar sparsity levels (\autoref{fig:similarity}).
The gap widens across architectures: individual latents agree even less, while global
similarity remains high.

Current interpretability literature often discusses `true' latents and canonical recovery
\citep{paulo2025sparseautoencoderstraineddata, leask2025sparseautoencoderscanonicalunits,
song2025positionmechanisticinterpretabilityprioritize}. If different autoencoders recover
the same structure through different decompositions, the question becomes less about which
decomposition is true and more about which is interpretable, at competitive reconstruction
\citep{gauderis_mechanistic_2026}.

\begin{figure}[H]
  \centering
  \includegraphics[width=0.8\textwidth]{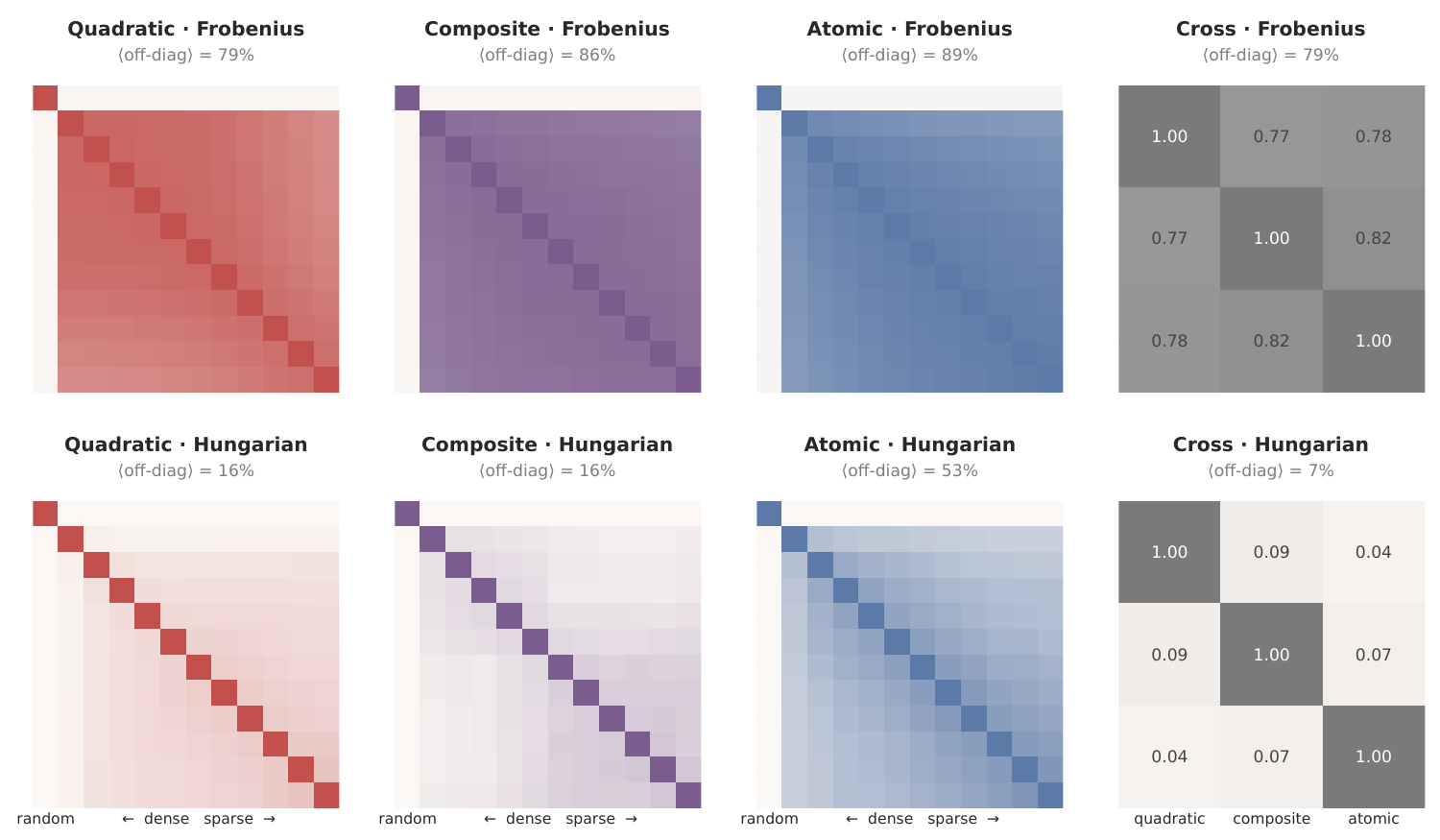}
  \caption{Frobenius (global) and Hungarian (per-latent) similarity between bilinear autoencoders with density penalties $\alpha \in [0, 1]$. Autoencoders with varying hyperparameters reconstruct highly similar subspaces, yet their latent decompositions vary significantly, especially for the composite latents. The first entry corresponds to a randomly initialised autoencoder; the last column corresponds to similarity between autoencoder kinds at $\alpha = 0.3$.}
  \label{fig:similarity}
\end{figure}

\section{Conclusion} \label{sec:conclusion}
\paragraph{Summary.}
Bilinear autoencoders decompose representations into polynomial latents that admit geometric analysis from weights alone. We argue that polynomials are an appealing non-linear primitive for interpretability: their composition stays inside the same algebraic class, and their geometry is read directly from their weight tensors. Section 3 formalises this through three nested hypotheses---atomic, composite, and quadratic---each a constraint on the mixing matrix $\wei{C}$ that trades expressivity for interpretability. The composite regime, a global topK on $\wei{C}$, recovers low-dimensional manifolds automatically, without the post-hoc co-activation analysis other methods rely on. Our quantitative and qualitative analysis reveals that higher-rank geometries are prevalent in language model activations, that our autoencoders recover them in an interpretable form, and that the recovered input subspace is stable despite disagreement in the individual atoms.

\paragraph{Future work.} This paper is intentionally scoped toward geometries and its interpretable extraction. Hence, the most fruitful future work is to causally ablate these latents. While manifold-based steering is inherently more involved than its linear counterpart, our autoencoder might make steering more reliable. By capturing the precise geometric structure through the weights, we can mathematically ensure we do not steer off-manifold and to reduce unintended effects of our edits. Next, the structural priors in this paper are a proof-of-concept; we did not sweep across priors to systematically assess which gave the best interpretative trade-off. We are excited about future work that builds atop polynomial latents and to characterise which kinds various models prefer. Finally, probing on quadratic latents is a natural next step given our results showing they capture concepts more naturally. 

\paragraph{Impact.} Bilinear autoencoders move beyond the linear representation hypothesis that dominates current interpretability, while remaining algebraically tractable. Because their computation and complexity admit a closed form, their description length is well-defined, which is rarely the case for nonlinear operations \citep{ayonrinde2024interpretabilitycompressionreconsideringsae}. We believe this shifts the focus of interpretability from the search for isolated interpretable bases towards understandable compositional structures and interactions \citep{gauderis_mechanistic_2026}.

\begin{ack}

\end{ack}

\bibliographystyle{plainnat}
\bibliography{refs}








\appendix

\section{Notation}
\label{app:notation}

We use colour to distinguish two roles that symbols play throughout
this paper. Red denotes \wei{weight-based} quantities:
parameters fixed after training, analysable without any input.
Blue denotes \act{activation-based} quantities: values
that depend on an input $\act{\mathbf{x}}$ and change with
each forward pass.

\paragraph{Dimensions.}
\begin{itemize}
  \item $d$ --- model (input) dimension.
  \item $h$ --- hidden dimension inside the bilinear encoder, indexed by $j$.
  \item $k$ --- number of latents (dictionary size), indexed by $i$.
  \item $n$ --- effective batch size (samples $\times$ sequence length).
\end{itemize}

\paragraph{Activations.}
\begin{itemize}
  \item $\act{\mathbf{x}} \in \mathbb{R}^d$ --- input activation vector.
  \item $\act{X} = \act{\mathbf{x}}\act{\mathbf{x}}^\top \in
        \mathbb{R}^{d \times d}$ --- lifted product-space representation;
        rank-1 positive-semidefinite.
  \item $\act{\hat{X}} \in \mathbb{R}^{d \times d}$ --- reconstructed
        product-space representation.
  \item $\act{\mathbf{z}} \in \mathbb{R}^k$ --- latent activation vector;
        the sparse code produced by the encoder.
  \item $\act{z}_i = \act{\mathbf{x}}^\top \wei{W}_i \act{\mathbf{x}}
        = \langle \wei{W}_i,\, \act{X} \rangle_F$ --- scalar activation
        of the $i$-th latent.
\end{itemize}

\paragraph{Encoder/Decoder weights.}
\begin{itemize}
  \item $\wei{L}, \wei{R} \in \mathbb{R}^{h \times d}$ --- left and right
        encoder factor matrices; $j$-th rows $\wei{\mathbf{l}}_j,
        \wei{\mathbf{r}}_j \in \mathbb{R}^d$.
  \item $\wei{C} \in \mathbb{R}^{k \times h}$ --- composition matrix;
        $i$-th row $\wei{\mathbf{c}}_i$ mixes hidden units into latent $i$;
        entry $\wei{C}_{ij}$ is the static weight of hidden unit $j$ in
        latent $i$.
\item $\wei{W}_i = \sum_{j \in \mathrm{supp}(\wei{\mathbf{c}}_i)} \wei{C}_{ij}\,
        \wei{\mathbf{l}}_j\wei{\mathbf{r}}_j^\top
        \in \mathbb{R}^{d \times d}_{\mathrm{sym}}$ --- symmetric bilinear
        form of the $i$-th latent. Since $\act{\mathbf{x}}^\top A
        \act{\mathbf{x}} = \act{\mathbf{x}}^\top \tfrac{A+A^\top}{2}
        \act{\mathbf{x}}$, only the symmetric part of any bilinear form
        contributes, so $\wei{W}_i$ is taken symmetric without loss of
        generality.
  \item $\wei{\mathcal{W}} \in \mathbb{R}^{k \times (d \times d)}$ ---
        full encoder tensor; slice $\wei{\mathcal{W}}_i = \wei{W}_i$.
  \item $\wei{K} = \wei{\mathcal{W}}\wei{\mathcal{W}}^\top
        = \wei{C}^\top(\wei{L}\wei{L}^\top \odot \wei{R}\wei{R}^\top)
        \wei{C} \in \mathbb{R}^{k \times k}$ --- kernel (Gram) matrix;
        allows evaluating the reconstruction loss without materialising
        the $d^2$-dimensional product space.
\end{itemize}

\paragraph{Dictionary atoms (representation hypotheses).}
\begin{itemize}
  \item $\wei{W}_i \in \mathbb{R}^{d \times d}_{\mathrm{sym}}$ ---
        $i$-th dictionary atom (quadratic hypothesis); the same symbol
        as the encoder bilinear form.
  \item $\wei{\mathbf{w}}_i \in \mathbb{R}^d$ --- $i$-th atomic
        dictionary direction; under the atomic hypothesis
        $\wei{W}_i = \wei{\mathbf{w}}_i\wei{\mathbf{w}}_i^\top$.
  \item Under the composite hypothesis, $\wei{W}_i = \sum_{j \in
        \mathrm{supp}(\wei{\mathbf{c}}_i)} \wei{C}_{ij}\,
        \wei{\mathbf{w}}_j\wei{\mathbf{w}}_j^\top$; no additional
        letter is needed beyond $\wei{C}_{ij}$.
\end{itemize}

\paragraph{Spectral quantities (weight-based).}
\begin{itemize}
  \item $\wei{W}_i = \wei{U}_i \wei{\Lambda}_i \wei{U}_i^\top$ ---
        eigendecomposition of the $i$-th latent's bilinear form.
  \item $\wei{\lambda}_j^{(i)},\, \wei{\mathbf{v}}_j^{(i)}$ ---
        $j$-th eigenvalue and eigenvector of $\wei{W}_i$; computable
        from weights alone, no forward pass required.
\end{itemize}

\paragraph{Similarity metrics.}
\begin{itemize}
  \item $\mathrm{Sim}_F(\wei{\mathcal{W}}, \wei{\mathcal{W}}') =
        2\,\mathrm{Tr}(\wei{K}^\top \wei{K}') /
        (\|\wei{K}\|_F^2 + \|\wei{K}'\|_F^2)$ ---
        Frobenius (global) similarity between two autoencoders.
  \item $\mathrm{Sim}_H(\wei{\mathcal{W}}, \wei{\mathcal{W}}')$ ---
        Hungarian (per-latent) similarity after optimal assignment.
\end{itemize}

\section{Experimental setup} \label{app:setup}

We use the same hyperparameters across all discussed autoencoders. Hyperparameters follow
standard practice for sparse autoencoders; results are robust across the variations we
tested (\autoref{app:models}).

\paragraph{Data.} In \autoref{sec:empirics}, we exclusively used
\href{https://huggingface.co/Qwen/Qwen3.5-0.8B}{\texttt{Qwen3.5-0.8B-Base}}
\citep{qwen35blog}---a small but competent model---with input samples from the
\href{https://huggingface.co/datasets/HuggingFaceFW/fineweb}{FineWeb dataset}
\citep{penedo2024the}. This lets us sweep over reasonably sized autoencoders and fully
train on a tight compute budget. All autoencoders train in about 15 minutes on an RTX
4080. We replicate these results on other open source models, discussed in
\autoref{app:models}.

\paragraph{Optimiser.} We use the Muon optimiser \citep{jordan2024muon}, which we have
found to significantly outperform Adam, sometimes up to $5\times$ faster convergence. We
use a constant learning rate schedule. We use a linear warmup for $\alpha$ to prioritise
reconstruction over sparsity near the start. We use topK annealing to sparsify
$\wei{C}$, leaving only $0.1\%$ of entries active: we decay to the target sparsity over the first half of training and freeze the sparsity mask for the last $20\%$, which stabilises convergence.

\paragraph{Architecture.} We have found orthogonal initialisation
to perform better than Xavier initialisation. Orthogonal initialisation favours sparser
solutions at a minute reconstruction cost. We use $8,192$ latents and $16,384$ hidden dimensions across models, and $\alpha=0.3$ by default. Weight-tying the encoder and decoder does not impact reconstruction and hence do so. 

\begin{figure}[H]
  \centering
  \begin{minipage}{0.31\textwidth}
    \centering
    \begin{tabular}{@{}ll@{}}
      \textbf{Batch size}     & $32$ \\
      \textbf{Sequence length} & $256$ \\
      \textbf{Steps}          & $2^{11}$ \\
      \textbf{Total tokens}   & $\approx 8\mathrm{M}$ \\
      \textbf{Shuffling}      & No \\
    \end{tabular}
  \end{minipage}\hfill
  \begin{minipage}{0.31\textwidth}
    \centering
    \begin{tabular}{@{}ll@{}}
      \textbf{Optimiser}      & Muon \\
      \textbf{Momentum}       & $0.95$ \\
      \textbf{Learning rate}  & $0.03$ \\
      \textbf{Schedule}       & constant \\
      \textbf{$\alpha$ warmup} & $256$ steps \\
    \end{tabular}
  \end{minipage}\hfill
  \begin{minipage}{0.34\textwidth}
    \centering
    \begin{tabular}{@{}ll@{}}  
      \textbf{$d$ (model)}     & $1024$ \\
      \textbf{$k$ (latents)}   & $8 \times 1024$ \\
      \textbf{$h$ (hidden)}    & $16 \times 1024$ \\
      \textbf{Default $K$}     & $0.1\%$ \\
      \textbf{Default $\alpha$} & $0.3$ \\
    \end{tabular}
  \end{minipage}
  \caption{Hyperparameters related to data, the optimiser, and the architecture,
  respectively.}
  \label{fig:hyperparameters}
\end{figure}

Finally, our code is publicly available at \codelink and extensively uses PyTorch \citep{paszke2019pytorchimperativestylehighperformance}.

\section{Pseudocode} \label{app:code}

We provide distilled pseudo code for the initialisation and forward pass of the `atomic' autoencoder. The intuition for why this works is provided in \autoref{app:efficiency}.

\begin{minipage}{\linewidth}
\begin{lstlisting}[style=mintedclean]
def __init__(self, config):
    self.left = nn.init.orthogonal(config.d_features, config.d_model)
    self.right = nn.init.orthogonal(config.d_features, config.d_model)

def forward(self, acts: Tensor['batch dims'], alpha: float):
    # Compute the latents
    latents = linear(acts, self.left) * linear(acts, self.right)

    # Compute the density and regularisation term
    density = hoyer(latents, dim=0).mean()

    # Compute constituent parts of the reconstruction error
    kernel = (self.left @ self.left.T) * (self.right @ self.right.T)
    recons = einsum(latents, latents, kernel, "... h1, ... h2, h1 h2 -> ...")
    cross = latents.square().sum(-1)

    # Compute the error and loss
    target = acts.square().sum(-1, keepdim=True)
    error = ((recons - 2 * cross + target) / target).mean()
    loss = error + alpha * density
    return loss, dict(error=error, density=density)
\end{lstlisting}
\end{minipage}

\section{Kernel trick for efficient training}
\label{app:efficiency}

Bilinear autoencoders differ from ordinary ones because they reconstruct the product
space. While the inputs are naturally factored as $\act{X} = \act{\mathbf{x}}\act{\mathbf{x}}^\top$,
the outputs are not: $\act{\hat{X}} \neq \act{\hat{\mathbf{x}}}\act{\hat{\mathbf{x}}}^\top$.
Hence, naively computing the reconstruction loss requires materialising the quadratic
space, which is unmanageable at scale. However, this can be avoided by expanding the
numerator (using $\act{\mathbf{z}} := \wei{\mathcal{W}}\act{X}$):
\begin{align}
\label{eq:loss}
\mathrm{NMSE}(\act{\hat{X}}, \act{X}) \;=\;
\frac{\|\act{\hat{X}} - \act{X}\|^2_F}{\|\act{X}\|^2_F} =
\frac{\act{\mathbf{z}}^\top \wei{K} \act{\mathbf{z}}
      - 2 \|\act{\mathbf{z}}\|^2
      + \|\act{X}\|^2_F}{\|\act{X}\|^2_F}
\end{align}

\begin{figure}[H]
  \centering
  \includegraphics[width=0.9\textwidth]{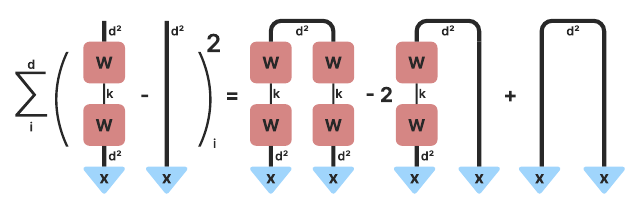}
  \caption{Diagrammatic formulation of \autoref{eq:loss}. Lines indicate tensor
  contractions over that index, while unconnected lines represent unmodified open
  indices. Thick lines represent a quadratic space.}
  \label{fig:loss}
\end{figure}

The reconstruction $\act{\hat{X}}$ never appears. All three terms are inner products of
$\act{\mathbf{x}}$ and $\act{\mathbf{z}}$, and the first is weighted by
$\wei{K}$, which is efficiently computable via
\autoref{eq:kernel}. This is reminiscent of the kernel trick to simulate an intractable
polynomial feature space through inner products, except that here $\wei{K}$ is learned by
the autoencoder. Due to the factorisation of $\wei{\mathcal{W}}$, we can compute
$\wei{K}$ without materialising any higher-order tensors:
\begin{equation}
\wei{K} \;=\; \wei{\mathcal{W}}\wei{\mathcal{W}}^\top
           \;=\; \wei{C}^\top(\wei{L}\wei{L}^\top \odot \wei{R}\wei{R}^\top)\wei{C}
\label{eq:kernel}
\end{equation}

\begin{figure}[H]
  \centering
  \includegraphics[width=0.6\textwidth]{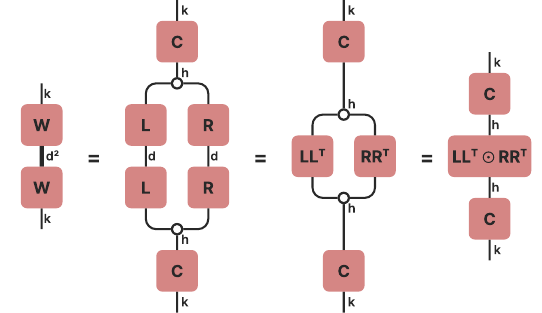}
  \caption{Diagrammatic equation for computing the kernel
  $\wei{K} = \wei{\mathcal{W}}\wei{\mathcal{W}}^\top$ efficiently using a sensible
  contraction order. The kernel matrix of a bilinear autoencoder can be evaluated by
  recasting it as an element-wise multiplication of two matrices.}
  \label{fig:kernel}
\end{figure}

Consequently, bilinear autoencoders scale quadratically in compute with latent count $k$ while their memory usage is identical to that of an ordinary autoencoder. While this trick reduces
computational complexity, the kernel matrix $\wei{K} \in \mathbb{R}^{k \times k}$ can
still be a memory bottleneck. However, it can be evaluated by splitting the $k \times k$
space into $p$ blocks $s_1 \oplus s_2 \oplus \dots \oplus s_p$, which avoids full
materialisation:
\[
\act{\mathbf{z}}^\top \wei{K} \act{\mathbf{z}} =
\sum_{s_a, s_b} \act{\mathbf{z}}_{s_a}\, \wei{K}_{s_a s_b}\, \act{\mathbf{z}}_{s_b}
\]
This can be further improved since only the symmetric part of $\wei{K}$ contributes;
sampling only the lower or upper triangular part speeds up computation by a factor of $2$.

\section{Proof of \autoref{thm:receptive-field}}
\label{app:proof}

\begin{proof}[Proof of \autoref{thm:receptive-field}]
Since $\act{\mathbf{x}} \sim \mathrm{Unif}(S^{d-1})$ and $\wei{\mathbf{w}} \in S^{d-1}$,
the Poincar\'{e}--Maxwell--Borel lemma gives
$\sqrt{d}\,(\wei{\mathbf{w}}^\top\act{\mathbf{x}}) \xrightarrow{\mathcal{D}} Z \sim \mathcal{N}(0,1)$
as $d\to\infty$. Together with concentration of $\|\mathbf{g}\|$ around $\sqrt{d}$
for $\mathbf{g}\sim\mathcal{N}(0,I_d)$, this yields, for fixed $\tau$,
\[
  \mathbb{P}(\wei{\mathbf{w}}^\top\act{\mathbf{x}} > \tau)
        \;\sim\; \mathbb{P}(Z > \tau\sqrt{d}),
  \qquad
  \mathbb{P}\!\left((\wei{\mathbf{w}}^\top\act{\mathbf{x}})^2 > \tau\right)
        \;\sim\; 2\,\mathbb{P}(Z > \sqrt{\tau d}).
\]
Applying the standard Gaussian tail asymptotic via Mills'
ratio, $\mathbb{P}(Z > a) = \Theta(a^{-1}e^{-a^2/2})$ for $a > 0$:
\[
  \frac{\mathbb{P}\!\left((\wei{\mathbf{w}}^\top\act{\mathbf{x}})^2 > \tau\right)}
       {\mathbb{P}(\wei{\mathbf{w}}^\top\act{\mathbf{x}} > \tau)}
  = \frac{\Theta\!\left((\tau d)^{-1/2}\,e^{-\tau d/2}\right)}
         {\Theta\!\left((\tau^2 d)^{-1/2}\,e^{-\tau^2 d/2}\right)}
  = \Theta\!\left(\sqrt{\tau}\;e^{-d\tau(1-\tau)/2}\right)
  = \Theta\!\left(e^{-d\tau(1-\tau)/2}\right),
\]
where the last equality absorbs the fixed constant $\sqrt{\tau}$ into the $\Theta$.
Since $\tau\in(0,1)$, the exponent is strictly negative, confirming exponential
decay in $d$.
\end{proof}

\section{Product space error correction}
\label{app:correction}

This section discusses the discrepancy between reconstructing $\act{\mathbf{x}}$ and
$\act{\mathbf{x}}\act{\mathbf{x}}^\top$. This product-space correction admits an exact
closed form. Taking $\|\act{\mathbf{x}}\|^2 = 1$ due to scale-invariance
(\autoref{eq:loss}) and substituting
$\act{\mathbf{x}}^\top \act{\hat{\mathbf{x}}} =
\tfrac{1}{2}(1 + \|\act{\hat{\mathbf{x}}}\|^2 - s)$
with $s = \|\act{\mathbf{x}} - \act{\hat{\mathbf{x}}}\|^2$,
\begin{align}
S &= \|\act{\mathbf{x}}\act{\mathbf{x}}^\top
     - \act{\hat{\mathbf{x}}}\act{\hat{\mathbf{x}}}^\top\|^2_F \nonumber \\
  &= 1 + \|\act{\hat{\mathbf{x}}}\|^4
       - 2\,(\act{\mathbf{x}}^\top \act{\hat{\mathbf{x}}})^2 \nonumber \\
  &= \tfrac{1}{2}(1 - \|\act{\hat{\mathbf{x}}}\|^2)^2
   + s(1 + \|\act{\hat{\mathbf{x}}}\|^2)
   - \tfrac{1}{2}s^2.
\label{eq:quadratic-recons}
\end{align}
Under the mild assumption $\|\act{\hat{\mathbf{x}}}\|^2 \approx 1$, which any sensible
reconstruction should satisfy, the first term vanishes and the third is negligible for
small $s$, leaving $S \approx 2s$. The product-space loss is therefore approximately
twice the input-space loss. We use the exact correction in \autoref{fig:layers}, which
consequently shows that bilinear autoencoders achieve better reconstruction than the
baseline across most model layers and variants in the product space. This suggests that
bilinear autoencoders can leverage decodable structure in interactions between input
elements to improve reconstruction.

\section{Results on additional models} \label{app:models}

We extend our empirical results from \autoref{sub:reconstruction} on two other
well-known open-source models: Gemma-3-1B \citep{gemmateam2025gemma3technicalreport}
and Llama-3.2-1B \citep{grattafiori2024llama3herdmodels}. These are in line with
expectations from Qwen3.5-0.8B.

\begin{figure}[H]
  \centering
  \includegraphics[width=\textwidth]{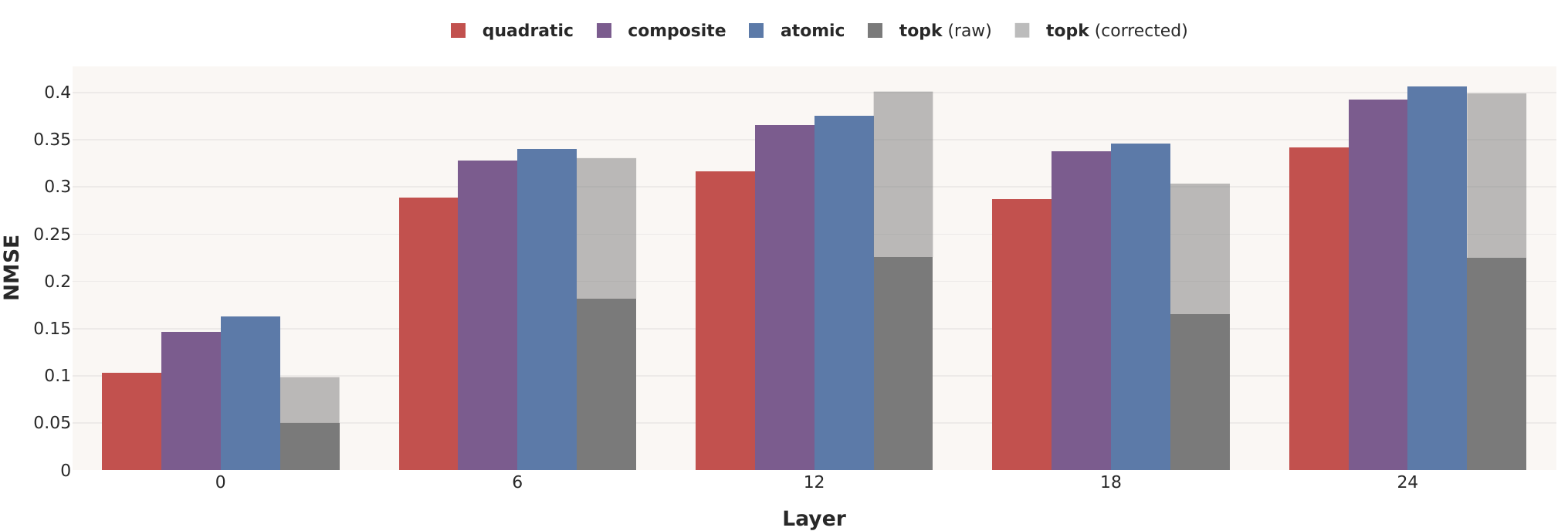}
  \caption{Sweep over selected layers on Gemma-3-1B using the hyperparameters discussed
  in \autoref{app:setup}. To clarify, $k = 16\mathrm{k}$ (i.e.\ not scaled with the
  slightly increased model dimension of $d = 1152$). The results are in line with
  \autoref{sub:reconstruction}, likely because the model width and depth resemble
  Qwen3.5-0.8B.}
  \label{fig:layers-gemma}
\end{figure}

\begin{figure}[H]
  \centering
  \includegraphics[width=\textwidth]{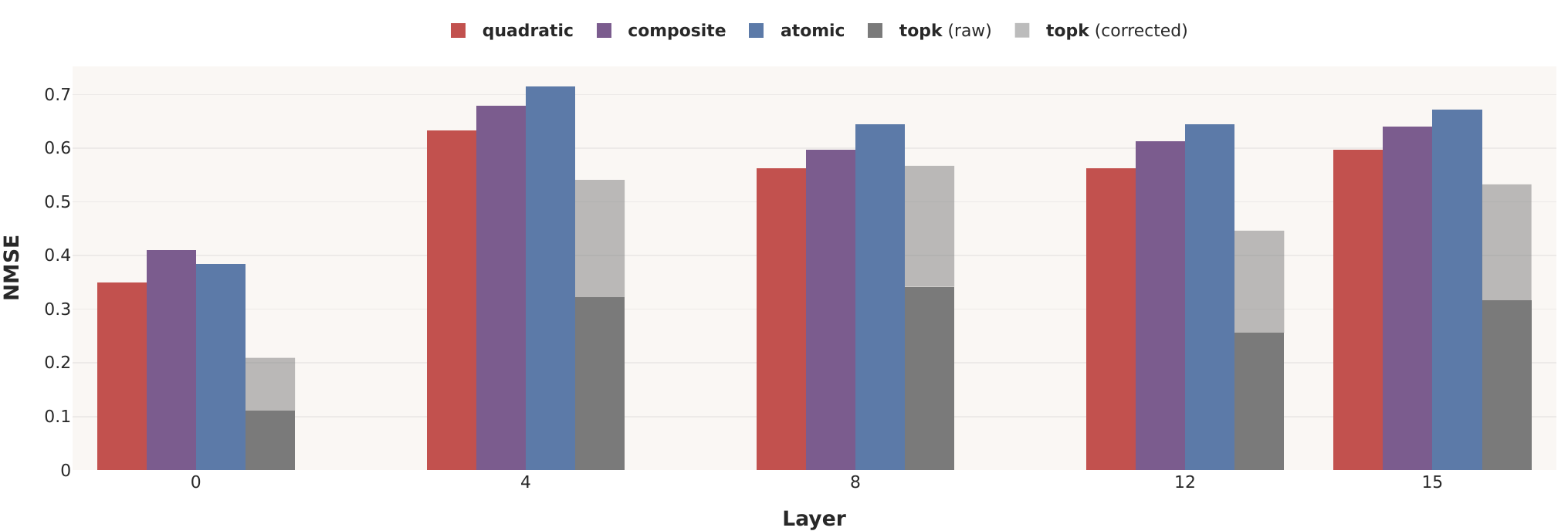}
  \caption{Sweep over selected layers on Llama-3.2-1B using the hyperparameters
  discussed in \autoref{app:setup}. Reconstruction is much worse for both the topK
  baseline and our models, which is expected since $k$ and $h$ are kept constant despite
  $d = 2048$ being twice as large; this means the quadratic space is $4\times$ as large.}
  \label{fig:layers-llama}
\end{figure}

\section{Interactive viewer} \label{app:viwer}

To make the extracted geometries easy to inspect, we publish an interactive viewer at \url{https://bae-9xf.pages.dev} covering all 8{,}192 latents from a composite autoencoder trained on layer 18 of Qwen3.5-0.8B with ${\approx}64\text{M}$ tokens, $8\times$ longer than for \autoref{fig:layers}.

The landing page lays out the corpus as a scatter over four of the summary statistics defined below, with an alternative UMAP view. The 20 highest-importance latents are highlighted in a sidebar, each carrying an unevaluated 2--4 word label produced by a single Claude pass over its top-activating contexts.

\begin{figure}[H]
  \centering
  \includegraphics[width=0.8\textwidth]{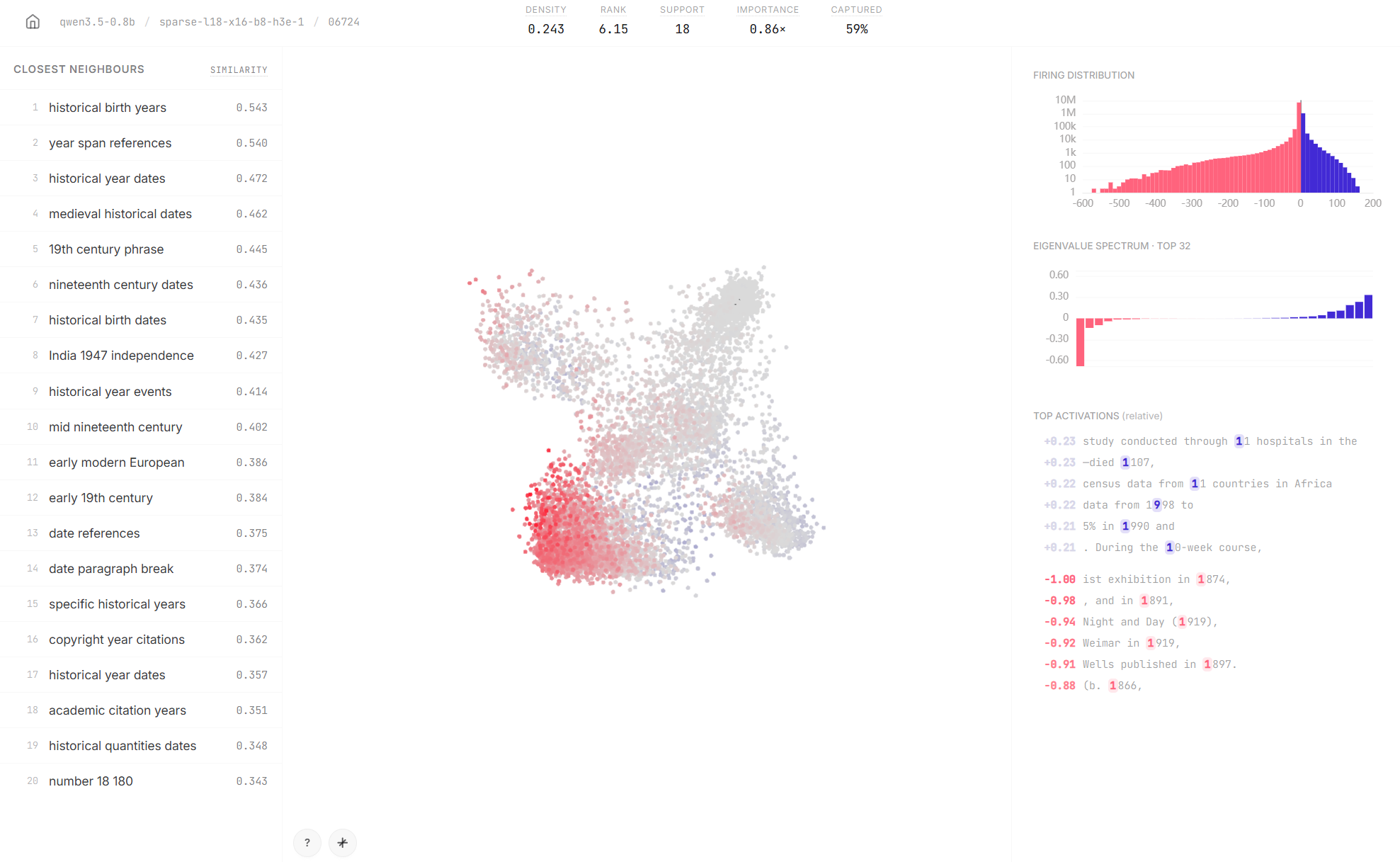}
  \caption{Screenshot of the interactive viewer user interface for a particular latent: \url{https://bae-9xf.pages.dev/composite/06724}.}
  \label{fig:viewer}
\end{figure}

Each composite has a dedicated page anchored by a 3D scatter of token contexts, projected onto the top-3 eigenvectors of $\wei{W}_i = \wei{U}_i \wei{\Lambda}_i \wei{U}_i^\top$ (selected by $|\wei{\lambda}|$). Eigendirections optionally overlay the scatter as bidirectional axes (purple for positive $\wei{\lambda}$, pink for negative; bidirectional because the bilinear form depends only on $(\wei{\mathbf{v}}_j^{(i)\top}\act{\mathbf{x}})^2$). Points are drawn from ${\sim}8$M tokens by weighted reservoir sampling $w \propto (\|\act{\mathbf{z}}\| + \varepsilon)^4$, over-representing samples far from the origin so the manifold isn't drowned by the dense central cluster. Surrounding panels show the eigenvalue spectrum (with the three rendered axes labelled $X$, $Y$, $Z$), the 6 highest-activating $\pm 4$-token contexts per firing sign, and the 20 closest neighbours ranked by subspace overlap $\|\wei{U}_i^\top \wei{U}_j\|_F^2 / \sqrt{r_i r_j}$.

The header reports five summary statistics. Density measures how concentrated the firing distribution is. Effective rank gauges how many eigenvalues carry meaningful weight. Support counts the encoder atoms feeding into $\wei{W}_i$. Importance reports the total spectral energy, normalised so the average composite reads $1\times$. Captured is the spectral share of the three rendered eigenvectors and signals how faithfully the 3D scatter represents the full manifold.
\begin{align*}
\text{density} \;&=\; \mathrm{Hoyer}(\act{\mathbf{z}}_i), &
\text{importance} \;&=\; \textstyle\sum_j (\wei{\lambda}_j^{(i)})^2, \\[4pt]
\text{effective rank} \;&=\; \frac{(\sum_j |\wei{\lambda}_j^{(i)}|)^2}{\sum_j (\wei{\lambda}_j^{(i)})^2}, &
\text{captured} \;&=\; \frac{|\wei{\lambda}_1^{(i)}| + |\wei{\lambda}_2^{(i)}| + |\wei{\lambda}_3^{(i)}|}{\sum_j |\wei{\lambda}_j^{(i)}|}.
\end{align*}

\section{Random viewer examples} \label{app:examples}

To avoid selection bias, we show 6 random latents and their geometry.

\begin{figure}[H]
  \centering
  \includegraphics[width=\textwidth]{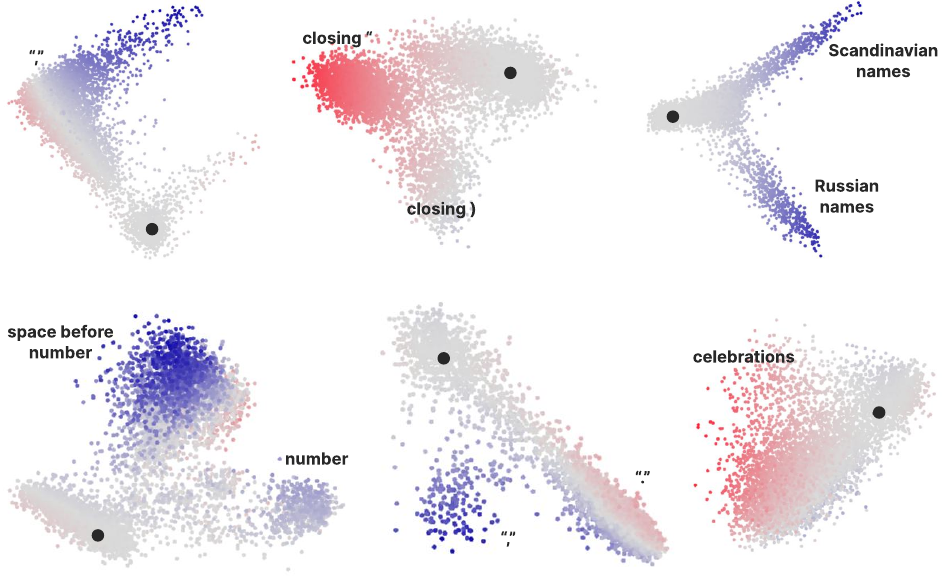}
  \caption{The first few manifolds by index from our interactive viewer (indices 1 and 4 were excluded as they are dense and uninteresting, about 10\% of latents are dense). This is a representative sample for how many features are primarily linear (top-left and top-right) versus more cluster-based (the others). }
  \label{fig:random-manifolds}
\end{figure}



\end{document}